\title{Canonical Correlation Analysis for Misaligned Satellite Image Change Detection}
\author{ Hichem Sahbi \\ $ $ \\ {CNRS, LIP6 Lab, Sorbonne University, Paris}}
\begin{document}

\maketitle             

\begin{abstract}
  Canonical correlation analysis (CCA) is a statistical learning method that seeks to build view-independent latent representations from  multi-view data. This method has been successfully applied to several pattern analysis tasks such as image-to-text mapping and view-invariant object/action recognition. However, this success is highly dependent on the quality of data pairing (i.e., alignments) and mispairing adversely affects the generalization ability of the learned CCA representations.\\
  In this paper, we address the issue of alignment errors using a new variant of canonical correlation analysis referred to as alignment-agnostic (AA) CCA. Starting from erroneously paired data taken from different views, this CCA finds transformation matrices by optimizing a  constrained  maximization problem that mixes a data correlation term with context regularization; the particular design of these two terms mitigates the effect of alignment errors when learning the CCA transformations. Experiments conducted on multi-view tasks, including multi-temporal satellite image change detection, show that our AA CCA method is highly effective and resilient to mispairing errors. 
\end{abstract}

{\small {\bf Keywords:}  Canonical Correlation Analysis, Learning Compact Representations,  Misalignment Resilience, Change Detection}

\newtheorem{proposition}{Proposition}
\newtheorem{proof}{Proof}

\section{Introduction}
\indent Several tasks in computer vision and neighboring fields require {\it labeled} datasets in order to build effective statistical learning models. It is widely agreed that the accuracy of these models  relies substantially on the availability of large labeled training sets. These sets require a tremendous human annotation effort and are thereby  very expensive for many large scale classification problems including  image/video-to-text  (a.k.a captioning)\cite{krizhevsky2012imagenet,sahbi2013cnrs,russakovsky2015imagenet,boujemaa2004visual,wang2013directed}, multi-modal information retrieval \cite{tollari2008comparative,ferecatu2008telecomparistech}, multi-temporal change detection \cite{hussain2013change,bourdis2012spatio}, object recognition and segmentation~\cite{Postadjian17,sahbi2002coarse,li2011superpixel}, etc. The current trend in machine learning, mainly with the data-hungry deep models \cite{krizhevsky2012imagenet,russakovsky2015imagenet,he2016deep,goodfellow2016deep,jiu2017nonlinear}, is to bypass supervision, by making the training of these models totally unsupervised \cite{erhan2010does}, or at least weakly-supervised using: fine-tuning \cite{yosinski2014transferable}, self-supervision \cite{doersch2017multi}, data augmentation and game-based models \cite{richter2016playing}. However, the hardness of collecting annotated datasets does not {\it only} stem from assigning accurate labels to these data, but also from aligning them; for instance, in the neighboring field of machine translation, successful training models require accurately aligned bi-texts (parallel bilingual training sets), while in satellite image change detection, these models require accurately georeferenced and registered satellite images. This level of requirement, {\it both} on the accuracy of labels and {\it their alignments}, is clearly hard-to-reach; alternative models, that skip the sticky alignment requirement, should be preferred. \\ 

\indent  Canonical correlation analysis (CCA) \cite{hotelling1936relations,anderson1958introduction,hardoon2004canonical,sahbi2017interactive} is one of the statistical learning models that require accurately aligned (paired) multi-view data\footnote{Multi-view data stands for input data described with multiple modalities such as documents described with text and images.}; CCA finds -- for each view -- a transformation matrix that maps data from that view to a view-independent (latent) representation such that aligned data obtain  highly correlated latent representations. Several extensions of CCA have been introduced in the literature including nonlinear (kernel) CCA \cite{melzer2003appearance},  sparse CCA \cite{hardoon2011sparse,witten2009extensions,zhang2013binary}, multiple CCA \cite{via2007learning}, locality preserving and instance-specific CCA \cite{sun2007locality,zhai2015instance}, time-dependent CCA \cite{yger2012adaptive} and other unified variants (see for instance \cite{de2009unification,sun2013canonical}); these methods have been applied to several pattern analysis tasks such as image-to-text \cite{sun2011canonical}, pose estimation \cite{melzer2003appearance,sun2007locality} and object recognition \cite{haghighat2017low}, multi-camera activity correlation \cite{ferecatu2009multi,loy2009multi} and motion alignment \cite{zhou2009canonical,zhou2016generalized} as well as  heterogeneous sensor data classification \cite{kim2009canonical}.\\

\indent The success of all the aforementioned CCA approaches is highly {\it dependent} on the accuracy of  alignments between  multi-view data.  In practice, data are subject to misalignments (such as registration errors in satellite imagery) and sometimes completely unaligned (as in muti-lingual documents) and this skews the learning of CCA. Excepting a few attempts -- to handle temporal deformations in monotonic sequence datasets \cite{fischer2007time} using canonical time warping~\cite{zhou2016generalized} (and its deep extension \cite{trigeorgis2017deep}) -- none of these existing CCA variants address alignment errors for non-monotonic datasets\footnote{Non-monotonic stands for datasets without a ``unique'' order (such as patches in images).}.  Besides CCA, the  issue of data alignment has been approached, in general, using manifold alignment~\cite{ham2005semisupervised,lafon2006data,wang2008manifold}, Procrustes analysis \cite{luo2002iterative} and source-target domain adaption ~\cite{feuz2017collegial} but none of these methods consider resilience to misalignments as a part of CCA design (which is the main purpose of our contribution in this paper). Furthermore, these data alignment solutions rely on a strong ``apples-to-apples'' comparison hypothesis ({\it that data taken from different views have similar structures}) which does not always hold especially when handling datasets with heterogeneous views (as text/image data and multi-temporal or multi-sensor satellite images). Moreover, even when data are globally well (re)aligned, some {\it residual} alignment errors are difficult to handle (such as parallax in multi-temporal satellite imagery) and harm CCA (as shown in our experiments). \\

\indent In this paper, we introduce a novel CCA approach that handles misaligned data; i.e.,  it does not require any preliminary step of accurate data alignment. This is again very useful for different applications where aligning data is very time demanding or when data are taken from multiple sources (sensors, modalities, etc.) which are intrinsically misaligned\footnote{Satellite images -- georeferenced with the Global Positioning System (GPS) -- have localization errors that may reach 15 meters in some geographic areas. On high resolution satellite images (sub-metric resolution) this corresponds to alignment errors/drifts that may reach 30 pixels.}. The benefit of our approach is twofold; on the one hand, it models the uncertainty of alignments using a new data correlation term and on the other hand, modeling alignment uncertainty allows us to use not only decently aligned data (if available) when learning CCA, but also the unaligned ones. In sum, this approach can be seen as an extension of CCA on unaligned sets compared to standard CCA (and its variants) that operate only on accurately aligned data. Furthermore, the proposed method is  as efficient as standard CCA and its computationally complexity grows w.r.t the dimensionality (and not the cardinality) of data, and this makes it very suitable for large datasets. \\

\noindent Our CCA formulation is based on the optimization of a constrained objective function that combines two terms; a correlation criterion and a context-based regularizer. The former maximizes a weighted correlation between data  with a high cross-view similarity  while the latter makes this weighted correlation   high for data  whose neighbors have high correlations too (and vice-versa). We will show that optimizing this constrained maximization problem is equivalent to solving an iterative generalized eigenvalue/eigenvector decomposition; we will also show that the solution of this iterative process converges to a fixed-point.  Finally, we will illustrate the validity of our CCA formulation  on different challenging problems including change detection both on {\it residually and strongly} misaligned multi-temporal satellite images; indeed, these images are subject to  alignment errors due to the hardness of image registration under challenging conditions, such as occlusion and parallax. \\

\indent The rest of this paper is organized as follows; section~\ref{section2} briefly reminds the preliminaries in canonical correlation analysis, followed by our main contribution: a novel alignment-agnostic CCA, as well as some theoretical results about the convergence of the learned CCA transformation to a fixed-point (under some constraints on the parameter that weights our regularization term). Section~\ref{section3} shows the validity of our method both on synthetic toy data as well as real-world problems namely satellite image change detection. Finally, we conclude the paper in section~\ref{section4} while providing possible extensions for a future work.  

\def\X{{\cal X}}
\def \S{{\cal I}}
\def \x{\mathbf{x}}
\def \u{\mathbf{u}}
\def \v{\mathbf{v}}
\def \y{\mathbf{y}}

\section{Canonical Correlation Analysis}\label{section2}

Considering the input spaces $\X_r$ and $\X_t$ as two sets of images taken from two modalities; in satellite imagery, these modalities could be two different sensors, or the same sensor at two different instants, etc.  Denote  $\S_r=\{\u_i\}_i$, $\S_t=\{\v_j\}_j$  as two subsets of $\X_r$ and $\X_t$ respectively; our goal is learn a transformation between $\X_r$ and  $\X_t$ that assigns, for a given $\u \in \X_r$, a sample $\v \in \X_t$. The learning of this transformation usually requires accurately {\it paired} data in $\X_r \times \X_t$ as in CCA. 

\def \Pr{{\textbf{P}_r}}
\def \Pt{{\textbf{P}_t}}
\def \Prt{{\textbf{P}'_r}}
\def \Ptt{{\textbf{P}'_t}}

\def \A{\textbf{A}}
\def \B{\textbf{B}}
\def \C{\textbf{C}}
\def \I{\textbf{I}}
\def \tr{{\textrm{tr}}}

\subsection{Standard  CCA}\label{standard}

Assuming centered data in $\S_r$, $\S_t$, standard CCA (see for instance \cite{hardoon2004canonical}) finds two projection matrices that map aligned data in $\S_r \times \S_t$ into a latent space while maximizing their correlation. Let $\Pr$, $\Pt$ denote these projection matrices which respectively correspond to  reference  and  test  images.   CCA  finds  these  matrices  as $(\Pr,\Pt)=\arg\max_{\A,\B} \tr(\A'\C_{rt} \B)$, subject to $\A'\C_{rr}\A=\I_u$, $\B'\C_{tt} \B =\I_{v}$; here $\I_{u}$ (resp. $\I_{v}$) is the $d_u\times d_u$ (resp. $d_v\times d_v$)  identity matrix, $d_u$ (resp. $d_v$) is the dimensionality of data in $\X_r$ (resp. $\X_t$),  $\A'$ stands for transpose of $\A$, $\tr$ is the trace, $\C_{rt}$ (resp. $\C_{rr}$, $\C_{tt}$) correspond to inter-class (resp. intra-class) covariance matrices of data in $\S_r$, $\S_t$, and equality constraints control the effect of scaling on the solution.  One can show that problem above is equivalent to solving the eigenproblem $\C_{rt}\C_{tt}^{-1}\C_{tr}\Pr =\gamma^2 \C_{rr}\Pr$ with $\Pt=\frac{1}{\gamma} \C_{tt}^{-1}\C_{tr} \Pr$. In practice, learning these two transformations requires ``paired'' data in $\S_r \times \S_t$, i.e., aligned data. However, and as will be shown through this paper, accurately paired data are not always available (and also expensive), furthermore the cardinality of $\S_r$ and $\S_t$ can also be different, so one should adapt CCA in order to learn transformation between data in $\S_r$ and $\S_t$ as shown subsequently. 

\def \bU {\textbf{U}}
\def \bV {\textbf{V}}
\def \D {\textbf{D}}

\subsection{Alignment Agnostic CCA}
We introduce our main contribution: a novel alignment agnostic CCA approach. Considering  $\{(\u_i,\v_j)\}_{ij}$ as a subset of $\S_r \times \S_t$ (cardinalities of $\S_r$, $\S_t$ are not necessarily equal), we propose to find the transformation matrices $\Pr$, $\Pt$ as
\begin{equation}\label{of0}
  \begin{array}{ll}  
\displaystyle  \max_{\Pr,\Pt}  &  \tr (\bU' \Pr \Ptt \bV \D) \\
    \textrm{s.t.}   & \Prt \C_{rr} \Pr = \I_{u}  \ \ \textrm{and} \ \  \Ptt \C_{tt}  \Pt = \I_{v},
                      \end{array}
\end{equation} 
the non-matrix form of this objective function is given subsequently. In this constrained maximization problem, $\bU$, $\bV$ are two matrices of data in $\S_r$, $\S_t$ respectively, and $\D$ is an (application-dependent) matrix with its given entry $\D_{ij}$ set to the cross affinity or the likelihood that a given data $\u_i \in \S_r$ aligns with  $\v_j \in \S_t$ (see section~\ref{pairing} about different setting of this matrix). This definition of $\D$, together with  objective function~(\ref{of0}), make CCA {\it alignment agnostic}; indeed, this objective function (equivalent to $\sum_{i,j} \langle \Prt \u_i,\Ptt \v_j \rangle \D_{ij}$) aims to maximize the correlation between  pairs (with a high cross affinity of alignment) while it also minimizes the correlation between pairs with small cross affinity. For a particular setting of $\D$, the following proposition provides a {\it special case.} \\

\begin{proposition}
\normalfont 
  provided that $|\S_r|=|\S_t|$ and $\forall \u_i \in \S_r$, $\exists ! \v_j \in \S_t$ such that $\D_{ij}= 1$;  the constrained maximization problem (\ref{of0}) implements standard CCA. 
\end{proposition}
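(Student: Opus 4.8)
The plan is to show that, under the stated hypotheses, both the objective and the constraints of~(\ref{of0}) coincide, term by term, with those of the standard CCA problem of Section~\ref{standard}, so that the two constrained maximizations --- and hence their generalized-eigenvalue solutions --- are identical. Throughout I would work with the non-matrix form of the objective, $\sum_{i,j}\langle \Prt \u_i,\Ptt \v_j\rangle\,\D_{ij}$, which is more transparent than the trace form for tracking the contribution of each entry $\D_{ij}$.

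First I would establish the structure of $\D$. The hypothesis that for every $\u_i$ there is a \emph{unique} $\v_j$ with $\D_{ij}=1$ (the remaining cross-affinities being $0$) says that $\D$ carries exactly one nonzero entry, equal to $1$, in each row; together with $|\S_r|=|\S_t|$ this exhibits $\D$ as the matrix of a bijection $i\mapsto\sigma(i)$, i.e.\ a permutation matrix. After relabelling the samples of $\S_t$ according to $\sigma^{-1}$ --- which merely reorders the columns of $\bV$ and leaves the symmetric covariances $\C_{rr}$, $\C_{tt}$ unchanged --- we may assume $\D=\I$, the $n\times n$ identity, while $\C_{rt}=\sum_i \u_i\v_i'$ becomes the covariance of the now-aligned pairs $(\u_i,\v_i)$.

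Next I would collapse the objective. Since $\D_{ij}=0$ for $j\neq i$, every cross term between mismatched samples drops out and the double sum reduces to $\sum_i \langle \Prt \u_i,\Ptt \v_i\rangle$. Writing each inner product as $\u_i'\,\Pr\,\Ptt\,\v_i$ (using $\Prt{}'=\Pr$) and invoking the cyclic property of the trace gives $\sum_i \langle \Prt \u_i,\Ptt \v_i\rangle=\tr\!\big(\Prt\,\C_{rt}\,\Pt\big)$, which is exactly the standard-CCA criterion $\tr(\A'\C_{rt}\B)$ with $\A=\Pr$ and $\B=\Pt$. I would then match the feasible sets: the constraints $\Prt\C_{rr}\Pr=\I_{u}$ and $\Ptt\C_{tt}\Pt=\I_{v}$ are verbatim those of standard CCA, independently of $\D$. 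With identical objective and identical constraints, the two problems share their maximizers, which are given by the eigenproblem $\C_{rt}\C_{tt}^{-1}\C_{tr}\Pr=\gamma^2\C_{rr}\Pr$ of Section~\ref{standard}; hence~(\ref{of0}) implements standard CCA.

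I expect the only delicate point to be the first step: arguing rigorously that per-row uniqueness of the unit entry, combined with equal cardinalities, forces $\D$ to be a genuine permutation (a bijection, rather than merely a single-valued, left-total correspondence in which several $\u_i$ could share the same partner), so that the relabelling to $\D=\I$ is legitimate and the reduced pairing is one-to-one. Everything after that is routine trace bookkeeping.
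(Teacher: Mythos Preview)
Your proposal is correct and follows essentially the same route as the paper: pass to the non-matrix form, relabel $\S_t$ so that the unique partner of $\u_i$ is $\v_i$ (i.e.\ reduce $\D$ to the identity), collapse the double sum to $\sum_i\langle \Prt\u_i,\Ptt\v_i\rangle=\tr(\Prt\C_{rt}\Pt)$, and observe that the equality constraints are unchanged. Your closing caveat is well placed --- the paper's hypothesis and proof likewise tacitly assume the correspondence $i\mapsto j$ is a bijection (not merely single-valued), so your care there is exactly the right instinct.
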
 
\begin{proof}
  \normalfont
considering the non-matrix form of (\ref{of0}), we obtain
{\normalfont  
  \begin{equation}\label{of00}
  \begin{array}{lll}  
    \displaystyle    \tr (\bU' \Pr \Ptt \bV \D) & = & \displaystyle  \sum_{i,j} \langle \Prt \u_i,\Ptt \v_j \rangle \D_{ij},
  \end{array}
\end{equation}
considering a particular order of $\S_t$ such that each sample $\u_i$ in $\S_r$ aligns with a unique $\v_i$ in $\S_t$ we obtain
\begin{equation}\label{of000}
\begin{array}{lll}  

          \displaystyle    \tr (\bU' \Pr \Ptt \bV \D)    & = &\displaystyle  \sum_{i,j} \langle \Prt \u_i,\Ptt \v_j \rangle \mathds{1}_{\{i=j\}} \\
    & = &\displaystyle  \sum_{i} \langle \Prt \u_i,\Ptt \v_i \rangle  \\
                               &  =  &  \tr(\displaystyle  \Prt   (\sum_{i}  \u_i \v'_i ) \Pt) \\
       &  = & \tr(\Prt \C_{rt} \Pt),
  \end{array}
\end{equation}
}
\noindent with  $\C_{rt}$ being the inter-class covariance matrix and $\mathds{1}_{\{\}}$  the indicator function. Since the equality constraints (shown in section~\ref{standard}) remain unchanged, the constrained maximization problem~(\ref{of0}) is strictly equivalent to standard CCA for this particular $\D$ $\Box$
\end{proof} 
\indent This particular setting of $\D$ is relevant only when data are accurately paired  and also when $\S_r$, $\S_t$ have the same cardinality. In practice, many problems involve unpaired/mispaired datasets with different cardinalities; that's why $\D$ should be relaxed using affinity between multiple pairs (as discussed earlier in this section) instead of using strict alignments. With this new CCA setting, the learned transformations $\Pt$, and $\Pr$ generate latent data representations  $\phi_t(\v_i)=\Ptt \v_i$, $\phi_r(\u_j)=\Prt \u_j$  which align  according to $\D$ (i.e., $\| \phi_r(\v_i)-\phi_t(\u_j)\|_2$ decreases if $\D_{ij}$ is high and vice-versa). However, when multiple entries $\{\D_{ij}\}_j$ are high for a given $i$, this may produce noisy correlations between the learned latent representations and may impact their discrimination power (see also experiments). In order to mitigate this effect, we also consider context regularization~\cite{sahbi2008context}.
\subsection{Context-based regularization}\label{context}
For each data $\u_i \in \S_r$, we define a (typed) neighborhood system $\{{\cal N}_c(i)\}_{c=1}^C$ which corresponds to the typed neighbors of $\u_i$ (see section~\ref{pairing} for an example). Using  $\{{\cal N}_c(.)\}_{c=1}^C$, we consider for each $c$ an intrinsic adjacency matrix ${\bf W}^c_u$  whose $(i,k)^{\textrm{th}}$ entry is set as ${\bf W}^c_{u,i,k} \propto \mathds{1}_{\{ k \in {\cal N}_c(i)\}}$. Similarly, we define  the matrices $\{{\bf W}^c_v\}_c$ for data $\{\v_j\}_j\in \S_t$; extra details about the setting of these matrices are again given in experiments. \\
\indent Using the above definition of $\{{\bf W}^c_u\}_c$, $\{{\bf W}^c_v\}_c$, we add an extra-term in the objective function (\ref{of0}) as  
\begin{equation}\label{ref11}
  \begin{array}{l}
\displaystyle \max_{\Pr,\Pt}  \  \textrm{tr}(\bU' \Pr \Ptt \bV \D)  + \beta  \displaystyle \sum_{c=1}^C\displaystyle \textrm{tr}\big( \bU' \Pr \Ptt \bV {\bf W}^c_{v} \bV' \Pt \Prt \bU {\bf W}^{c '}_{u}\big) \\ 
{\textrm{s.t.}}    \ \ \ \ \ \         \Prt \C_{rr} \Pr \ = {\bf I}_{u}  \ \ \ \ \ \textrm{and}   \ \ \  \ \         \Ptt \C_{tt} \Pt \  \ = {\bf I}_{v}.
\end{array}
\end{equation}
The above right-hand side term is equivalent to $$\small \beta \sum_c \sum_{i,j} \langle \Prt \u_i, \Ptt \v_j\rangle  \sum_{k,\ell} \langle \Prt \u_k, \Ptt \v_\ell\rangle {\bf W}^c_{u,i,k} {\bf W}^c_{v,j,\ell}$$ the latter corresponds to a neighborhood (or context) criterion which  considers that a high value of the correlation $\langle \Prt \u_i, \Ptt \v_j\rangle$,  in the learned latent space, should  imply  high correlation values in the neighborhoods $\{{\cal N}_c(i) \times {\cal N}_c(j)\}_c$. This term (via $\beta$) controls the sharpness of the correlations (and also the discrimination power) of the learned latent representations (see example in Fig.~\ref{fig:beta}). Put differently, if a given $(\u_i,\v_j)$ is surrounded by highly correlated pairs, then the correlation between $(\u_i,\v_j)$ should be maximized and vice-versa (see also~\cite{sahbi2010context,chetverikovSK05}). 

\subsection{Optimization}
Considering Lagrange multipliers for the equality constraints in Eq.~(\ref{ref11}), one may show that optimality conditions (related to the gradient of Eq.~(\ref{ref11}) w.r.t  $\Pr$, $\Pt$ and the Lagrange multipliers) lead to the following generalized eigenproblem
\begin{equation}\label{ep01}
\begin{array}{lll}
{\bf K}_{rt}  \C_{tt}^{-1}  {\bf K}_{tr} \Pr &= &\gamma^2 \C_{rr} \Pr \\
\textrm{with}  \ \ \ \ \ \Pt  & =& \frac{1}{\gamma} \  \C_{tt}^{-1}  {\bf K}_{tr} \Pr,
\end{array}
\end{equation}
\noindent here  ${\bf K}_{tr}={\bf K}_{rt}'$ and 
\begin{equation}\label{ep001}
\begin{array}{lll}
{\bf K}_{tr} =  \bV \D \bU' & + & \beta \sum_c \bV {\bf W}^c_v \bV' \Pt \Prt \bU {\bf W}^{c'}_u \bU' \\ 
           & +  & \beta \sum_c  \bV {\bf W}^{c'}_v \bV' \Pt \Prt \bU {\bf W}^c_u  \bU'.
\end{array}
\end{equation}

\noindent In practice, we solve the above eigenproblem iteratively. For each iteration $\tau$, we fix $\Pr^{(\tau)}$, $\Pt^{(\tau)}$ (in ${\bf K}_{tr}$, ${\bf K}_{rt}$) and we find the subsequent projection matrices  $\Pr^{(\tau+1)}$, $\Pt^{(\tau+1)}$ by solving Eq.~(\ref{ep01}); initially, $\Pr^{(0)}$, $\Pt^{(0)}$ are set using projection matrices of standard CCA. This process continues till a fixed-point is reached.  In practice,  convergence to a fixed-point is observed in less than five iterations.
 \begin{proposition}
\normalfont 
   let $\|.\|_1$ denote the entry-wise $L_1$-norm and ${\bf 1}_{\tiny vu}$ a $d_v\times d_u$  matrix of ones. Provided that the following inequality holds
\begin{equation}\label{loose} 
\beta < {\gamma_{\min}} \times  \bigg( \sum_c \big\| {\bf E}_c \ {\bf 1}_{\tiny vu} \ {\bf F}_c' \big\|_1 +  \sum_c \big\| {\bf G}_c \ {\bf 1}_{\tiny vu} \ {\bf H}_c' \big\|_1\bigg)^{-1}
\end{equation} 
\noindent with  $\gamma_{\min}$ being a lower bound of the positive eigenvalues of (\ref{ep01}), ${\bf E}_c=\bV {\bf W}^c_v \bV' \C_{tt}^{-1}$,  $ {\bf F}_c=\bU {\bf W}^{c}_u \bU' \C_{rr}^{-1}$,  ${\bf G}_c=\bV {\bf W}^{c'}_v \bV' \C_{tt}^{-1}$ and ${\bf H}_c=\bU {\bf W}^{c'}_u \bU' \C_{rr}^{-1}$; the problem in (\ref{ep01}), (\ref{ep001})  admits a unique solution $\tilde{\bf P}_r$, $\tilde{\bf P}_t$ as the eigenvectors of
\begin{equation}
\begin{array}{lll}\label{eq1111}
\tilde{\bf K}_{rt}  \C_{tt}^{-1}  \tilde{\bf K}_{tr} \Pr &= &\gamma^2 \C_{rr} \Pr \\
 \ \ \ \ \ \ \ \ \ \ \ \ \ \ \ \ \Pt  & =& \frac{1}{\gamma} \  \C_{tt}^{-1}  \tilde{\bf K}_{tr} \Pr,
\end{array}
\end{equation}
\noindent with $\tilde{\bf K}_{tr}$ being the limit of  
\begin{equation}
\begin{array}{lll}\label{rec0}

{\bf K}_{tr}^{(\tau+1)} = \Psi\big({\bf K}_{tr}^{(\tau)}\big),\\ 
\end{array}
\end{equation}
\noindent and $\Psi: \mathbb{R}^{d_v\times d_u}  \rightarrow \mathbb{R}^{d_v\times d_u}$  is given as  
\begin{equation}\label{rec}
\begin{array}{lll}

\Psi({\bf K}_{tr}) = \displaystyle  \bV \D \bU' & + & \beta \sum_c \bV {\bf W}^c_v \bV' \Pt \Prt \bU {\bf W}^{c'}_u \bU' \\ 
           & +  & \beta \sum_c  \bV {\bf W}^{c'}_v \bV' \Pt \Prt \bU {\bf W}^c_u  \bU',
\end{array}
\end{equation}
\noindent with $\Pt$, $\Pr$,  in (\ref{rec}), being functions of ${\bf K}_{tr}$ using (\ref{ep01}).
Furthermore, the matrices ${\bf K}_{tr}^{(\tau+1)}$ in (\ref{rec0}) satisfy the convergence property 
\begin{equation} 
\big\|{\bf K}_{tr}^{(\tau+1)} - \tilde{\bf K}_{tr}\big\|_1 \leq L^\tau\big\|{\bf K}_{tr}^{(\tau+1)} - {\bf K}_{tr}^{(0)}\big\|_1, 
\end{equation}
with $L=\frac{\beta}{\gamma_{\min}} \big(\sum_c\big\|  {\bf E}_c \ {\bf 1}_{\tiny vu} \ {\bf F}_c' \big\|_1 + \sum_c  \big\|{\bf G}_c \ {\bf 1}_{\tiny vu} \ {\bf H}_c' \big\|_1\big)$.
\end{proposition}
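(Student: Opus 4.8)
The plan is to read the recursion (\ref{rec0}) as a fixed-point iteration for the map $\Psi$ on the finite-dimensional Banach space $\big(\mathbb{R}^{d_v\times d_u},\|\cdot\|_1\big)$ and to prove that $\Psi$ is a contraction. Concretely, it suffices to exhibit a constant $L<1$ such that $\big\|\Psi(\mathbf{K}^{(1)}_{tr})-\Psi(\mathbf{K}^{(2)}_{tr})\big\|_1\le L\,\big\|\mathbf{K}^{(1)}_{tr}-\mathbf{K}^{(2)}_{tr}\big\|_1$ for all pairs of inputs; the Banach fixed-point theorem then yields at once the existence and uniqueness of the limit $\tilde{\mathbf{K}}_{tr}$ (hence of $\tilde{\mathbf{P}}_r,\tilde{\mathbf{P}}_t$ via (\ref{eq1111})) together with the geometric convergence estimate, which is the standard a priori bound for contractions. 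Since hypothesis (\ref{loose}) is engineered to be precisely the statement $L<1$, the entire proposition collapses to identifying and bounding this Lipschitz constant.

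First I would eliminate the implicit dependence of $\Psi$ on the eigenpair $(\Pr,\Pt)$ and rewrite its $\beta$-terms as an explicit function of $\mathbf{K}_{tr}$. From the second line of (\ref{ep01}), $\Pt=\frac1\gamma\C_{tt}^{-1}\mathbf{K}_{tr}\Pr$, and from the normalization constraint $\Prt\C_{rr}\Pr=\I_{u}$ applied to a complete $\C_{rr}$-orthonormal system of generalized eigenvectors, one gets the resolution of identity $\Pr\Prt=\C_{rr}^{-1}$, whence $\Pt\Prt=\frac1\gamma\C_{tt}^{-1}\mathbf{K}_{tr}\C_{rr}^{-1}$. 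Substituting this into (\ref{rec}) and regrouping factors according to the definitions of ${\bf E}_c,{\bf F}_c,{\bf G}_c,{\bf H}_c$ turns the map into an affine expression in $\mathbf{K}_{tr}$,
\[
\Psi(\mathbf{K}_{tr})=\bV\D\bU'+\frac{\beta}{\gamma}\sum_c\big({\bf E}_c\,\mathbf{K}_{tr}\,{\bf F}_c'+{\bf G}_c\,\mathbf{K}_{tr}\,{\bf H}_c'\big),
\]
so that the constant data term $\bV\D\bU'$ cancels in every difference $\Psi(\mathbf{K}^{(1)}_{tr})-\Psi(\mathbf{K}^{(2)}_{tr})$ and only the linear part remains.

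Next I would bound that linear part. Writing $\mathbf{M}=\mathbf{K}^{(1)}_{tr}-\mathbf{K}^{(2)}_{tr}$ and applying the triangle inequality reduces the problem to controlling each $\big\|{\bf E}_c\,\mathbf{M}\,{\bf F}_c'\big\|_1$ and $\big\|{\bf G}_c\,\mathbf{M}\,{\bf H}_c'\big\|_1$. Expanding a triple product entrywise and bounding every entry of $\mathbf{M}$ by $\|\mathbf{M}\|_1$ gives an inequality of the form $\big\|\mathbf{A}\,\mathbf{M}\,\mathbf{B}'\big\|_1\le\big\|\mathbf{A}\,{\bf 1}_{\tiny vu}\,\mathbf{B}'\big\|_1\,\|\mathbf{M}\|_1$ (up to replacing $\mathbf{A},\mathbf{B}$ by their entrywise absolute values), in which the all-ones matrix ${\bf 1}_{\tiny vu}$ acts as the worst-case normalized perturbation. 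Combining this with $1/\gamma\le 1/\gamma_{\min}$ over the positive spectrum then collects exactly the constant $L=\frac{\beta}{\gamma_{\min}}\big(\sum_c\|{\bf E}_c{\bf 1}_{\tiny vu}{\bf F}_c'\|_1+\sum_c\|{\bf G}_c{\bf 1}_{\tiny vu}{\bf H}_c'\|_1\big)$, and (\ref{loose}) forces $L<1$.

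I expect the difficulty to lie in two places rather than in the concluding contraction argument. The first is the reduction $\Pt\Prt=\frac1\gamma\C_{tt}^{-1}\mathbf{K}_{tr}\C_{rr}^{-1}$: the identity $\Pr\Prt=\C_{rr}^{-1}$ needs a \emph{complete} normalized eigenbasis, and the scalar $\frac1\gamma$ is really a diagonal $\mathrm{diag}(1/\gamma_i)$ sitting between $\Pr$ and $\Prt$, so I would either work with a uniformly scaled eigenbasis or carry that diagonal through the algebra and absorb it using $\max_i 1/\gamma_i\le 1/\gamma_{\min}$, which is exactly where the lower bound $\gamma_{\min}$ enters. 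The second is making the entrywise estimate $\big\|\mathbf{A}\,\mathbf{M}\,\mathbf{B}'\big\|_1\le\big\|\mathbf{A}\,{\bf 1}_{\tiny vu}\,\mathbf{B}'\big\|_1\,\|\mathbf{M}\|_1$ rigorous for matrices of arbitrary sign, since the ones-matrix surrogate must dominate all sign patterns of $\mathbf{M}$ at once; once these two estimates are secured, existence, uniqueness and the geometric rate follow immediately from the contraction principle.
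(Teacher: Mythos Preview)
Your proposal is correct and follows essentially the same route as the paper: both arguments reduce $\Pt\Prt$ to $\frac{1}{\gamma}\C_{tt}^{-1}{\bf K}_{tr}\C_{rr}^{-1}$, substitute into the $\beta$-terms of $\Psi$ to expose the factors ${\bf E}_c,{\bf F}_c,{\bf G}_c,{\bf H}_c$, and then bound the resulting bilinear expression entrywise against the all-ones matrix while replacing $1/\gamma$ by $1/\gamma_{\min}$. The paper's version of your entrywise estimate is the elementary inequality $\sum_i|a_i||b_i|\le\sum_{i,j}|a_i||b_j|$, and your two flagged subtleties (the diagonal of eigenvalues hiding behind the scalar $1/\gamma$, and the sign issue in the ones-matrix surrogate) are precisely the places where the paper's appendix is terse; otherwise the arguments coincide.
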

\begin{proof}
  see appendix 
\end{proof} 
Note that resulting from the extreme sparsity of the {\it typed} adjacency matrices  $\{{\bf W}^c_u\}_c$, $\{{\bf W}^c_v\}_c$, the upper bound about $\beta$ (shown in the sufficient condition in Eq.~\ref{loose}) is loose, and easy to satisfy; in practice,  we observed convergence for all the values of $\beta$ that were tried in our experiments (see the x-axis of Fig.~\ref{fig:beta}).  

\section{Experiments}\label{section3}
In this section, we show the performance of our method both on synthetic  and real datasets.  The goal is to show the extra gain brought when using our  alignment agnostic (AA) CCA approach  against standard CCA and other variants.

\subsection{Synthetic Toy Example} 

In order to show the strength of our AA CCA method, we first illustrate its performance on a 2D toy example.  We consider 2D data sampled from an ``arc'' as shown in Fig.~\ref{fig:toy}(a); each sample is endowed with an RGB color feature vector which depends on its curvilinear coordinates in that ``arc''. We duplicate this dataset using a 2D rotation (with an angle of $180^0$) and we add a random perturbation field (noise) both to the color features and the 2D coordinates (see Fig.~\ref{fig:toy}).  Note that accurate ground-truth pairing is available but, of course, not used in our experiments. \\
We apply our AA CCA (as well as standard CCA) to these data, and we show alignment results; this 2D toy example is very similar to the subsequent real data task as the goal is to find for each sample in the original set, its  correlations and its realignment with the second set.  From Fig.~(\ref{fig:toy}), it is clear that standard CA fails to produce accurate results when data is contaminated with random perturbations and alignment errors, while our AA CCA approach successfully realigns the two sets (see again details in Fig.~\ref{fig:toy}).

\begin{figure*}[t]
\begin{center}
\centering
\begin{tabular}[b]{c}
  \includegraphics[width=0.33\linewidth]{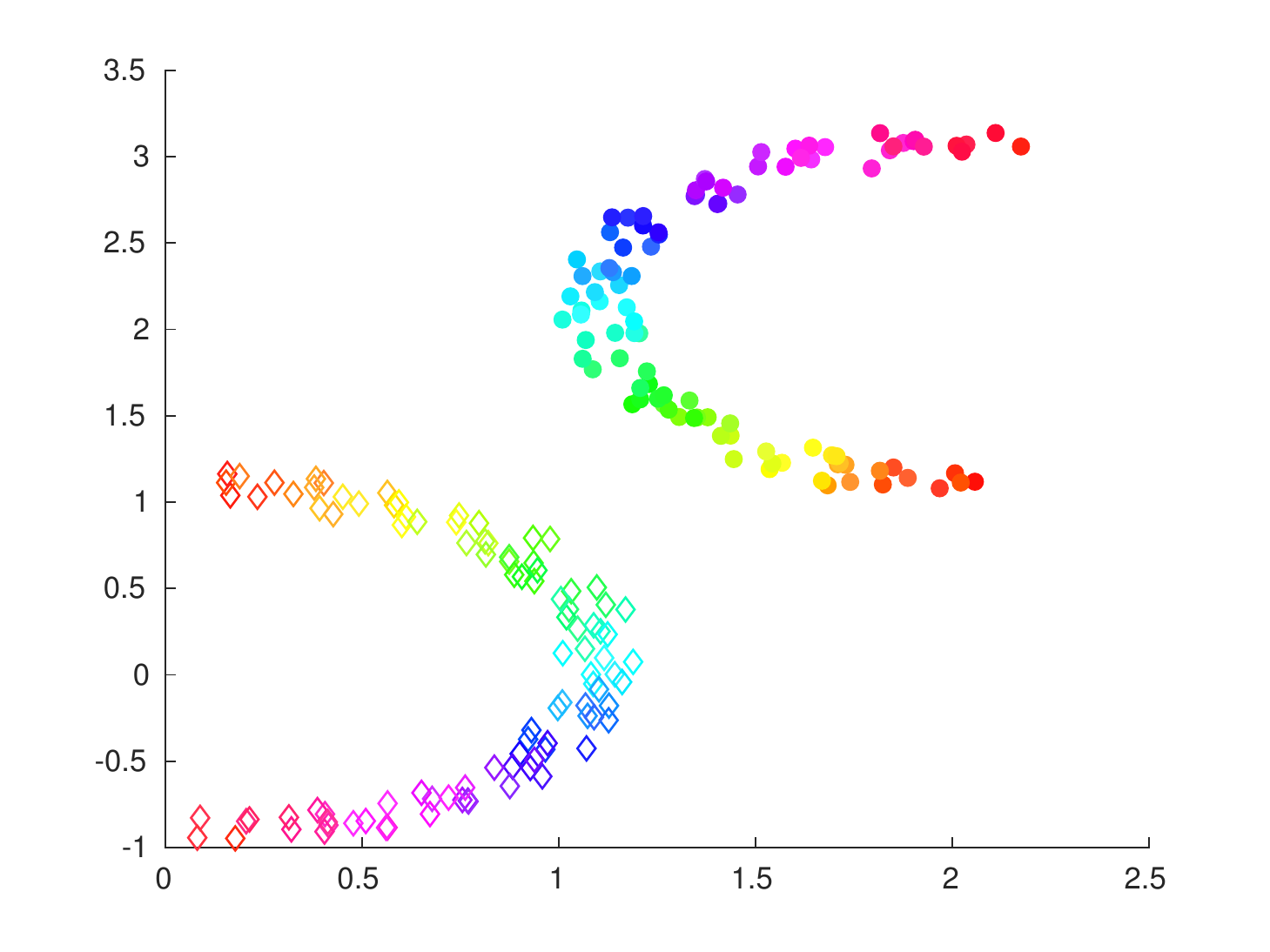}\includegraphics[width=0.33\linewidth]{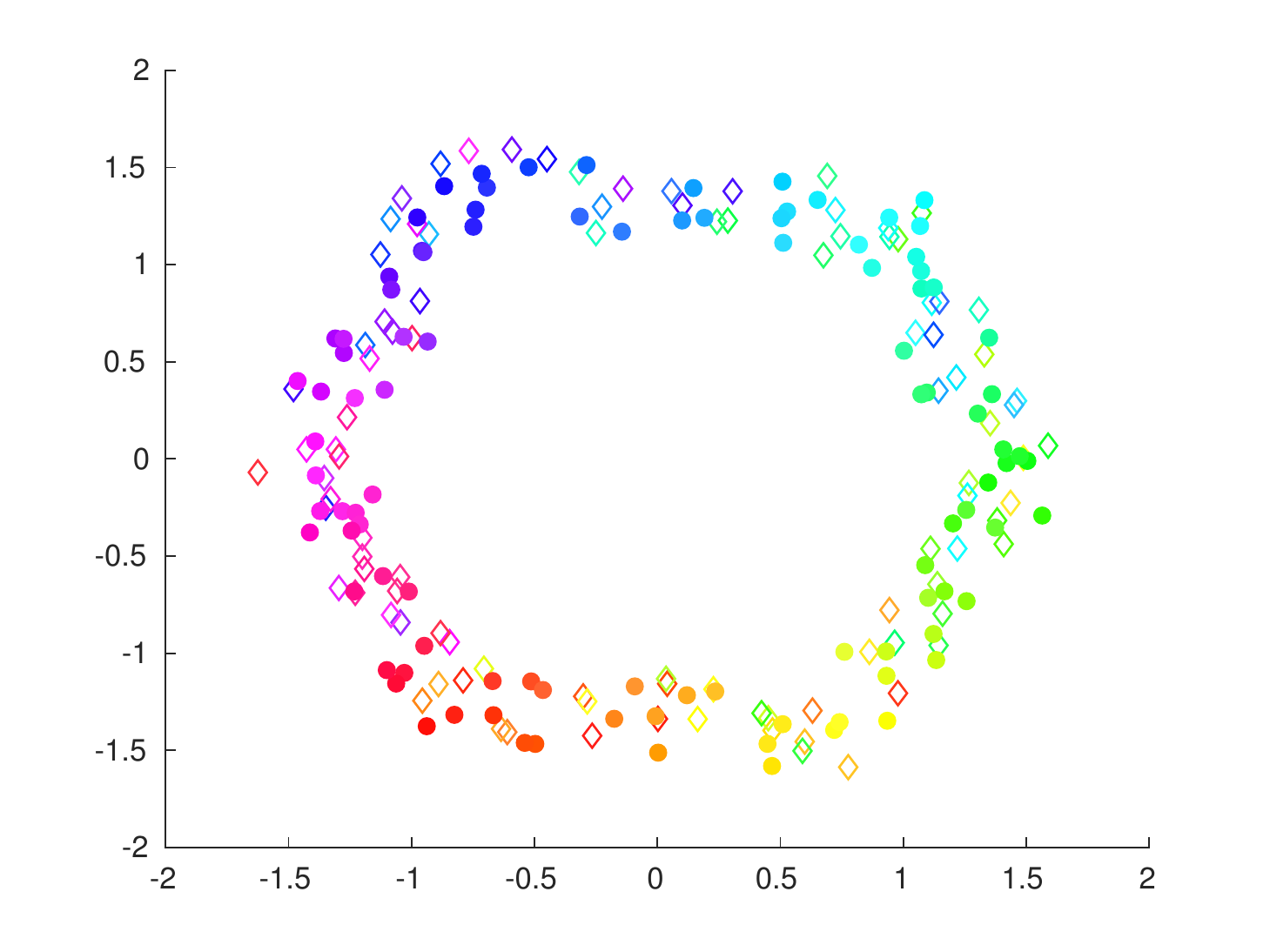}\includegraphics[width=0.33\linewidth]{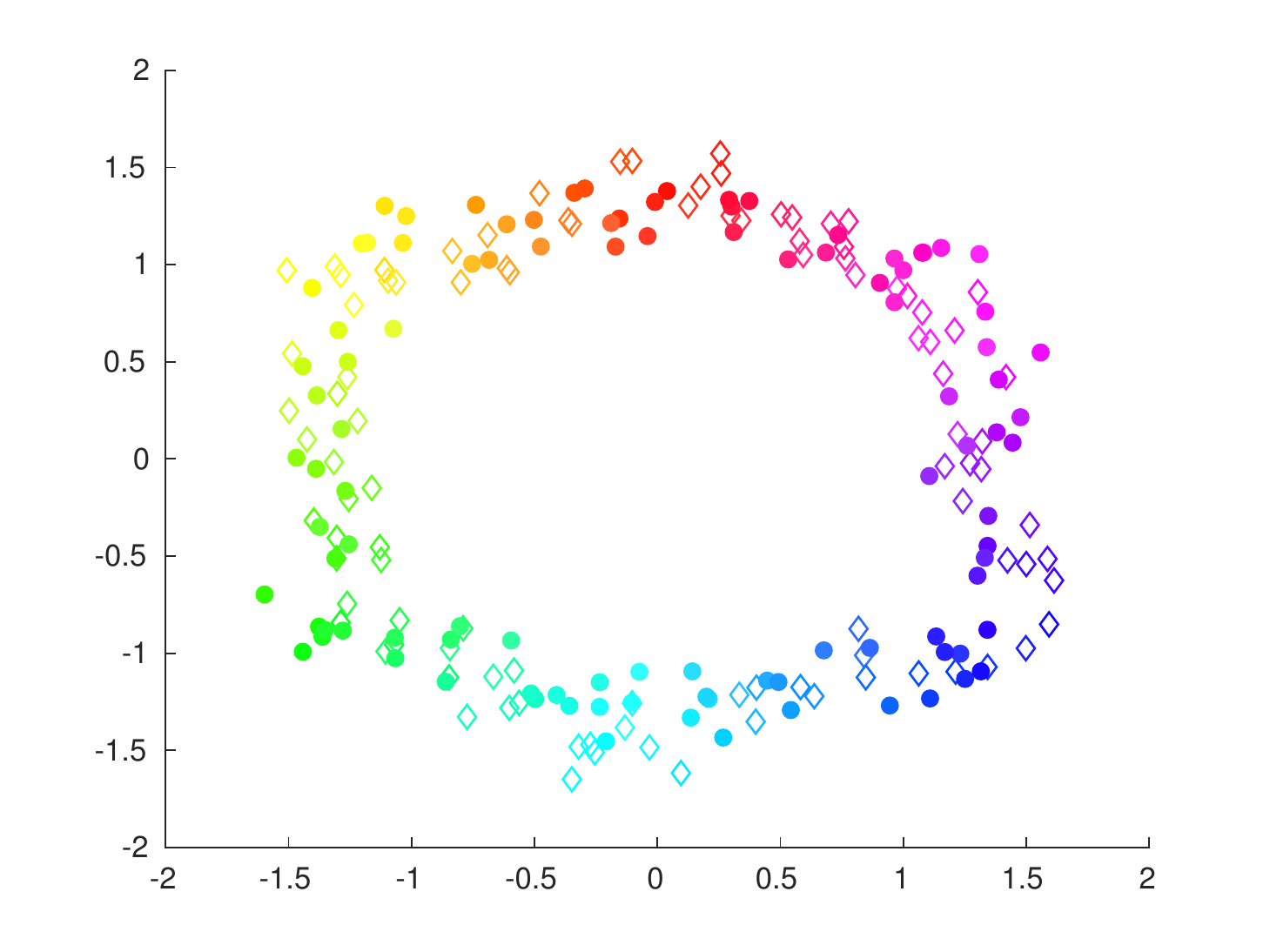} \\ 
  (a) \hspace{3.25cm} (b) \hspace{3.25cm}  (c)
\end{tabular}          
\end{center}
\caption{This figure shows the realignment results of CCA; (a) we consider  100 examples sampled from an ``arc'', each sample is endowed with an RGB feature vector. We duplicate this dataset using a 2D rotation (with an angle of $180^0$) and we add a random perturbation field both to the color features and 2D coordinates. (b) realignment results obtained using standard CCA; note that original data are not aligned, so in order to apply standard CCA, each sample in the first arc-set is paired with its nearest (color descriptor) neighbor in the second arc-set. (c) realignment results obtained using our AA CCA approach; again data are not paired, so we consider a fully dense matrix $\D$ that measures the cross-similarity (using an RBF kernel) between the colors of the first and the second arc-sets. In these toy experiments, $\beta$ (weight of context regularizer) is set to $0.01$ and  we use an isotropic neighborhood system in order to fill the context matrices $\{{\bf W}^c_u\}_{c=1}^{C}$, $\{{\bf W}^c_v\}_{c=1}^{C}$ (with $C=1$) and a given entry ${\bf W}^c_{u,i,k}$ is set to 1 iff  $\u_k$ is among the 10  spatial neighbors of $\u_i$. Similarly, we set the entries of  $\{{\bf W}^c_v\}_c^{C}$. {\bf For a better visualization of these results, better to view/zoom the PDF of this paper.}}\label{fig:toy}
\end{figure*}

\subsection{Satellite Image Change Detection}

We also evaluate and compare the performance of our proposed AA CCA method on the challenging task of satellite image change detection~(see for instance \cite{sahbi2015discriminant,RadkeAAR05,bourdis2011constrained,hussain2013change,sahbiCTF17,bourdis2012camera,Celik10,KunchevaF12,sahbi2013relevance}).  The goal is to find instances of relevant changes into a given scene acquired at instance $t_1$ with respect to the same scene taken at instant $t_0< t_1$; these acquisitions (at instants $t_0$, $t_1$) are referred to as reference and test images respectively. This task is known to be very challenging due to the difficulty to characterize relevant changes (appearance or disappearance of objects\footnote{This can be any object so there is no a priori knowledge about what object may appear or disappear into a given scene.}) from irrelevant ones such as the presence of cars, clouds, as well as {\it registration errors}. This task is also practically important; indeed, in the particular important scenario of damage assessment after natural hazards (such as tornadoes, earth quakes, etc.), it is crucial to achieve automatic change detection accurately in order to organize and prioritize rescue operations.
\subsubsection{JOPLIN-TORNADOES11 Dataset}
This dataset includes 680928 non overlapping image patches (of 30 $\times$ 30 pixels in RGB) taken from six pairs of (reference and test) GeoEye-1 satellite images (of 9850 $\times$ 10400 pixels each). This dataset is randomly split into two subsets: {\it labeled} used for training\footnote{From which a subset of 1000 is used for validation (as a dev set).} (denoted ${\cal L}_r \subset \S_r$, ${\cal L}_t \subset \S_t$) and {\it unlabeled} used for testing (denoted  ${\cal U}_r=\S_r \backslash {\cal L}_r$ and ${\cal U}_t = \S_t \backslash {\cal L}_t$) with $|{\cal L}_r|=|{\cal L}_t|=3000$ and  $|{\cal U}_r|=|{\cal U}_t|=680928-3000$. All patches in $\S_r$ (or in $\S_t$), stitched together, cover a very large area – of about 20 $\times$ 20 km$^2$ – around Joplin (Missouri) and show many changes after tornadoes that happened in may 2011 (building destruction, etc.) and no-changes (including irrelevant ones such as car appearance/disappearance, etc.).  Each patch in  $\S_r$, $\S_t$ is rescaled and encoded with 4096 coefficients corresponding to the output of an inner layer of the pretrained VGG-net \cite{Simonyan14c}. A given test patch is declared as a ``change'' or ``no-change'' depending on the score of SVMs trained on top of the learned CCA latent representations.\\
\noindent In order to evaluate the performances of change detection, we report the equal error rate (EER). The latter  is a balanced generalization error that equally weights errors in ``change'' and ``no-change'' classes. Smaller EER implies better performance.
\subsubsection{Data Pairing and Context Regularization}\label{pairing} 
In order to study the impact of AA CCA on the performances of change detection -- both with residual and relatively stronger misalignments -- we consider the following settings for comparison (see also table.~\ref{tablecca}). 
\begin{itemize}
\item {\bf Standard CCA:} patches are {\it strictly} paired by assigning each patch, in the reference image, to a unique patch in the test image (in the same location), so it assumes that satellite images are correctly registered. CCA learning is {\it supervised} (only labeled patches are used for training) and {\it no-context} regularization is used (i.e, $\beta=0$). In order to implement this setting,  we consider $\D$ as a diagonal matrix with $\D_{ii}=\pm 1$ depending on whether $\v_i \in {\cal L}_t$ is labeled as ``no-change'' (or ``change'') in the ground-truth, and $\D_{ii}=0$ otherwise.
\item {\bf Sup+CA CCA:} this is similar to  ``standard CCA''with the only difference being $\beta$ which is set to its ``optimal'' value ($0.01$) on the validation set (see Fig.~\ref{fig:beta}). 
\item {\bf SemiSup CCA:} this setting is similar to ``standard CCA'' with the only difference being the unlabeled patches which are now added when learning the CCA transformations, and $\D_{ii}$ (on the unlabeled patches) is  set to $2 \kappa(\v_i,\u_i)-1$ (score between $-1$ and $+1$); here $\kappa(.,.) \in [0,1]$ is the RBF similarity whose scale is set to the 0.1 quantile of pairwise distances in ${\cal L}_t \times {\cal L}_r$.
\item {\bf SemiSup+CA CCA:} this setting is similar to ``SemiSup CCA'' but context regularization is used (with again $\beta$ set to $0.01$). 
\item {\bf Res CCA:} this is similar to ``standard CCA'', but strict data pairing is {\it relaxed}, i.e., each patch in the reference image is assigned to multiple patches in the test image; hence, $\D$ is no longer diagonal, and set as $\D_{ij} = \kappa(\v_i,\u_j) \in [0,1]$ iff $(\v_i,\u_j) \in {\cal L}_t \times {\cal L}_r$ is labeled as ``no-change'' in the ground-truth,  $\D_{ij} =\kappa(\v_i,\u_j)-1 \in [-1,0]$ iff $(\v_i,\u_j)  \in {\cal L}_t \times {\cal L}_r$ is labeled as ``change'' and  $\D_{ij} =0$ otherwise.
 \item {\bf Res+Sup+CA CCA:} this is similar to  ``Res CCA''with the only difference being $\beta$ which is again set to $0.01$. 
\item {\bf Res+SemiSup CCA:} this setting is similar to ``Res CCA'' with the only difference being the unlabeled patches which are now added when learning the CCA transformations; on these unlabeled patches $\D_{ij} = 2\kappa(\v_i,\u_j)-1$. 
\item {\bf Res+SemiSup+CA CCA:} this setting is similar to ``Res+SemiSup CCA'' but context regularization is used (i.e., $\beta=0.01$). 
\end{itemize} 

\begin{figure}[t]
\begin{center}
\includegraphics[width=0.55\linewidth]{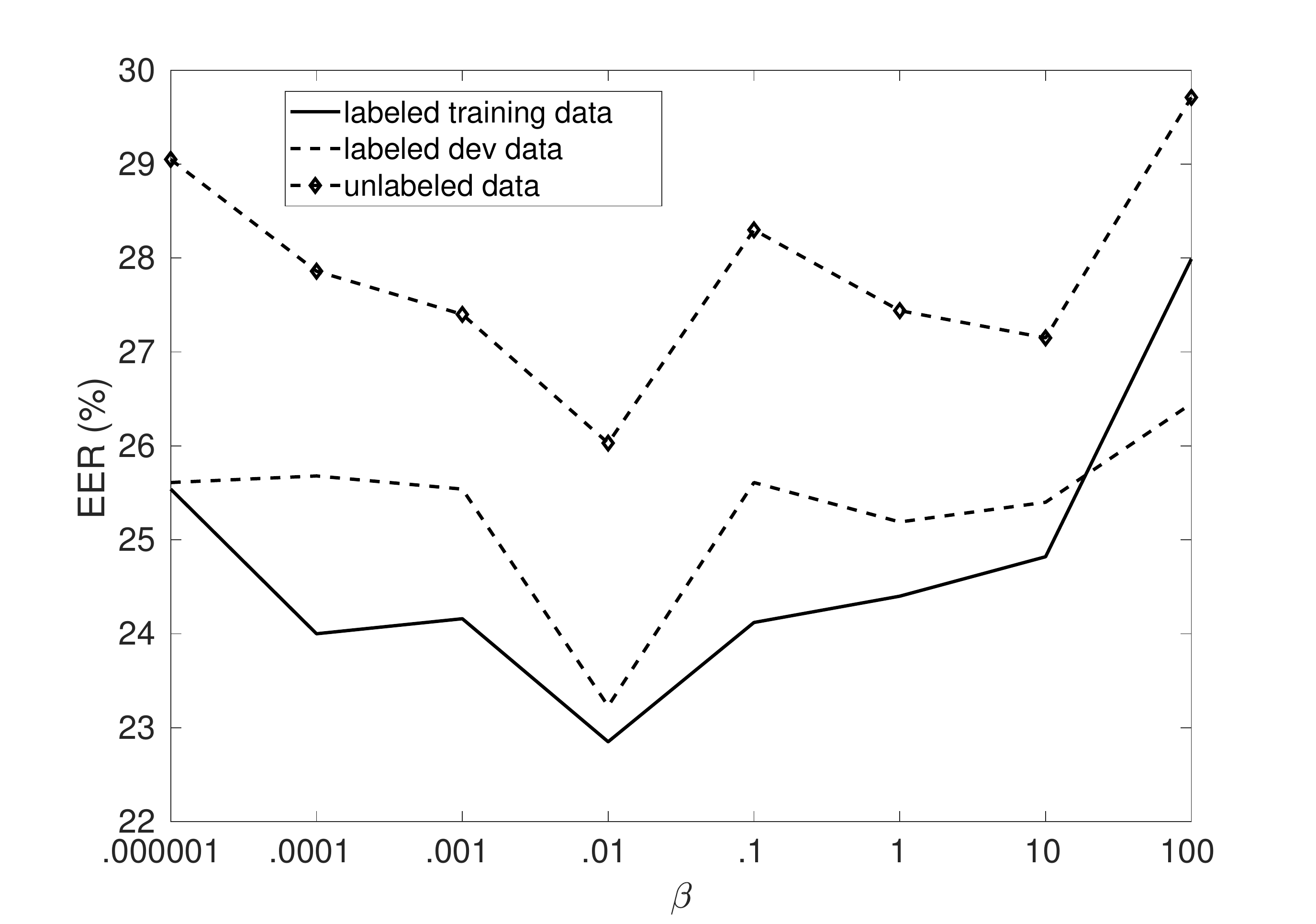}
\end{center}
\caption{This figure shows the evolution of change detection performances w.r.t $\beta$ on labeled training/dev data as well as the unlabeled data. These results correspond to the baseline Sup+CA CCA (under the regime of strong misalignments); we observe from these curves that $\beta=0.01$ is the best setting which is kept in all our experiments.}\label{fig:beta}
\end{figure}

\noindent {\bf Context setting:} in order to build the adjacency matrices of the context (see section~\ref{context}), we define for each patch $\u_i \in \S_r$ (in the reference image) an anisotropic (typed) neighborhood system $\{{\cal N}_c(i)\}_{c=1}^C$ (with $C=8$) which corresponds to the eight spatial neighbors of $\u_i$ in a regular grid \cite{thiemert2005applying,JeonM04,thiemert2006using}; for instance when $c=1$, ${\cal N}_1(i)$ corresponds to the top-left neighbor of $\u_i$. {Using  $\{{\cal N}_c(.)\}_{c=1}^8$, we build for each $c$  an intrinsic adjacency matrix ${\bf W}^c_u$  whose $(i,k)^{\textrm{th}}$ entry is set as ${\bf W}^c_{u,i,k} \propto \mathds{1}_{\{ k \in {\cal N}_c(i)\}}$; here $\mathds{1}_{\{\}}$ is the indicator function  equal to $1$ iff i) the patch $\u_k$ is neighbor to $\u_i$ and ii) its relative position is typed as $c$ ($c=1$ for top-left, $c=2$ for left, etc. following an anticlockwise rotation), and $0$ otherwise}. Similarly, we define  the matrices $\{{\bf W}^c_v\}_c$ for data $\{\v_j\}_j\in \S_t$.
\begin{table}
\begin{center}
  \resizebox{0.85\columnwidth}{!}{
\begin{tabular}{c|c|c||c}
Pairing & CCA Learning &  Context Regularization  & Designation \\
  \hline
  \hline
 strict & supervised  &  no  & Standard CCA \\
  strict & supervised  &  yes & Sup+CA CCA \\
 strict & semi-sup    &  no  & SemiSup CCA\\
 strict & semi-sup    &  yes & SemiSup+CA CCA \\
  \hline
relaxed & supervised  & no   & Res CCA\\
relaxed & supervised  & yes  & Res+Sup+CA CCA \\
relaxed & semi-sup    & no   & Res+SemiSup CCA \\
relaxed & semi-sup    & yes  & Res+SemiSup+CA CCA
\end{tabular}}
\vspace{0.15cm}
\caption{This table shows different configurations of CCA resulting from different instances of our model. In this table,  ``Sup'' stands for supervised, ``SemiSup'' for semi-supervised, ``CA'' for context aware and ``Res'' for resilient.}\label{tablecca}
\end{center}
\end{table} 
\subsubsection{Impact of AA CCA and Comparison} 
Table.~\ref{tableres} shows a comparison of different versions of AA CCA against other CCA variants under the regime of small residual alignment errors. In this regime, reference and test images are first registered using RANSAC~\cite{kim2003automatic}; an exhaustive visual inspection of the overlapping (reference and test) images (after RANSAC registration) shows sharp boundaries in most of the areas covered by these images, but some areas still include residual misalignments due to the presence of changes, occlusions (clouds, etc.) as well as parallax. Note that, in spite of the relative success of RANSAC in registering these images, our AA CCA versions (rows \#5--8) provide better performances (see table.~\ref{tableres}) compared to the other CCAs (rows \#1--4); this clearly corroborates the fact that residual alignment errors remain after RANSAC (re)alignment (as also observed during visual inspection of RANSAC registration). Put differently, our AA CCA method is not an opponent to RANSAC but complementary. \\
\begin{table}
  \begin{center}
\resizebox{0.80\columnwidth}{!}{
\begin{tabular}{cc||c|c|c}
\# &  Configurations & Labeled(train) & Labeled(dev) & Unlabeled \\
  \hline
  \hline
  1& Standard CCA   & 14.91 & 15.18 & 12.81 \\
  2& Sup+CA CCA     &   12.95 & 14.90 & 11.44 \\
  \hline 
  3& SemiSup CCA        &   11.26 & 12.80 & 11.18 \\
  4& SemiSup+CA CCA     &   12.57 & 11.82 & 09.96 \\
  \hline
  \hline 
  5& Res CCA            &   05.81 & 04.97 & {\bf 05.38} \\
  6& Res+Sup+CA CCA     &   06.35 & 05.53 & {\bf 05.55} \\
  \hline 
  7& Res+SemiSup CCA    &   08.60 & 08.74 & 08.33 \\
  8& Res+SemiSup+CA CCA &   08.77 & 08.60 & 06.94
\end{tabular}}
\vspace{0.15cm}
\caption{This table shows change detection EER (in \%) on labeled (training and validation) and unlabeled sets under the residual error regime. When context regularization (referred to as CA in this table) is used, $\beta$ is set to $10^{\small -2}$.}\label{tableres}
\end{center}
\end{table} 
\noindent These results also show that when reference and test images are globally well aligned (with some residual errors; see table.~\ref{tableres}), the gain in performance is dominated by the positive impact of alignment resilience; indeed, the impact of the unlabeled data is not always consistent (\#5,6 vs. \#7,8 resp.) in spite of being positive (in \#1,2 vs. \#3,4 resp.)  while the impact of context regularization is globally positive (\#1,3,5,7 vs. \#2,4,6,8 resp.). This clearly shows that, under the regime of small residual errors, the use of labeled data is already enough in order to enhance the performance of change detection; the gain comes essentially from alignment resilience with a marginal (but clear) positive impact of context regularization. \\ 
\indent In order to study the impact of AA CCA w.r.t stronger alignment errors (i.e. w.r.t a more challenging setting), we apply a relatively strong motion field to all the pixels in the reference image; precisely, each pixel is shifted along a direction whose x--y coordinates are randomly set to values between 15 and 30 pixels. These shifts are sufficient in order to make the quality of alignments used for CCA very weak so the different versions of CCA, mentioned earlier, become more sensitive to alignment errors (EERs increase by more than 100\% in table.~\ref{tablestrong} compared to EERs with residual alignment errors in table.~\ref{tableres}). With this setting, AA CCA is clearly more resilient and shows a substantial relative gain compared to the other CCA versions.
\begin{table}
\begin{center}
  \resizebox{0.80\columnwidth}{!}{
\begin{tabular}{cc||c|c|c}
\# &  Configurations & Labeled(train) & Labeled(dev) & Unlabeled \\
  \hline
  \hline
  1& Standard CCA    
                     & 25.63 & 25.61 & 28.44 \\
  2& Sup+CA CCA   
  
                     & 22.85 & 23.23 & {26.03} \\ 
\hline
  3& SemiSup CCA  
                     & 22.31 & 23.58 & 24.99 \\
  4& SemiSup+CA CCA 
                     & 25.74 & 25.40 & {25.47}  \\ 
  \hline
  \hline 
  5& Res CCA       
                     & 16.42 & 14.34 & {\bf 19.67} \\
  6& Res+Sup+CA CCA 
                     & 16.55 & 16.80 & {\bf 19.90}  \\ 
  \hline 
  7& Res+SemiSup CCA 
                     & 19.01 & 19.24 & {\bf 19.55}  \\
  8& Res+SemiSup+CA CCA
                     & 23.71 & 21.55 & 26.76                             
\end{tabular}}
\vspace{0.15cm}
\caption{This table shows change detection EER (in \%) on labeled (training and validation) and unlabeled sets under the strong error regime. When context regularization (referred to as CA in this table) is used, $\beta$ is set to $10^{-2}$.}\label{tablestrong}
\end{center}
\end{table}
\subsection{Discussion}
\noindent {\bf Invariance:} resulting from its misalignment resilience, it is easy to see that our AA CCA is {\it de facto} robust to local deformations as these deformations are strictly equivalent to local misalignments. It is also easy to see that our AA CCA may achieve invariance to similarity transformations; indeed, the matrices used to define the spatial context are translation invariant, and can also be made rotation and scale invariant  by measuring a ``characteristic'' scale and orientation of  patches in a given satellite image. For that purpose, dense SIFT can be used to recover (or at least approximate) the field of orientations and scales, and hence adapt the spatial support (extent and orientation) of context using the characteristic scale, in order to make context invariant to similarity transformations. \\

\noindent {\bf Computational Complexity:} provided that VGG-features are extracted (offline) on all the patches of the reference/test images, and provided that the adjacency matrices of context are precomputed\footnote{Note that the adjacency matrices of the spatial  neighborhood system can be computed offline once and reused.}, and since the adjacency matrices  $\{{\bf W}^c_u\}_c$, $\{{\bf W}^c_v\}_c$ are very sparse,  the computational complexity of evaluating Eq.~(\ref{ep001}) and solving the generalized eigenproblem in ~Eq.~(\ref{ep01}) both reduce to  $O(\min(d_u^2 d_v,d_v^2 d_u))$, here $d_u$, $d_v$ are again the dimensions of data in $\bU$, $\bV$ respectively; hence, this complexity is very equivalent to standard CCA which also requires solving a generalized eigenproblem. Therefore, the gain in the accuracy of our AA CCA  is obtained without any overhead in the computational complexity   that remains dependent on dimensionality of data (which is, in practice, smaller compared to the cardinality of our datasets). 

\begin{figure*}
  \centering
  \begin{tabular}[b]{c}
       {\includegraphics[width=0.3\linewidth]{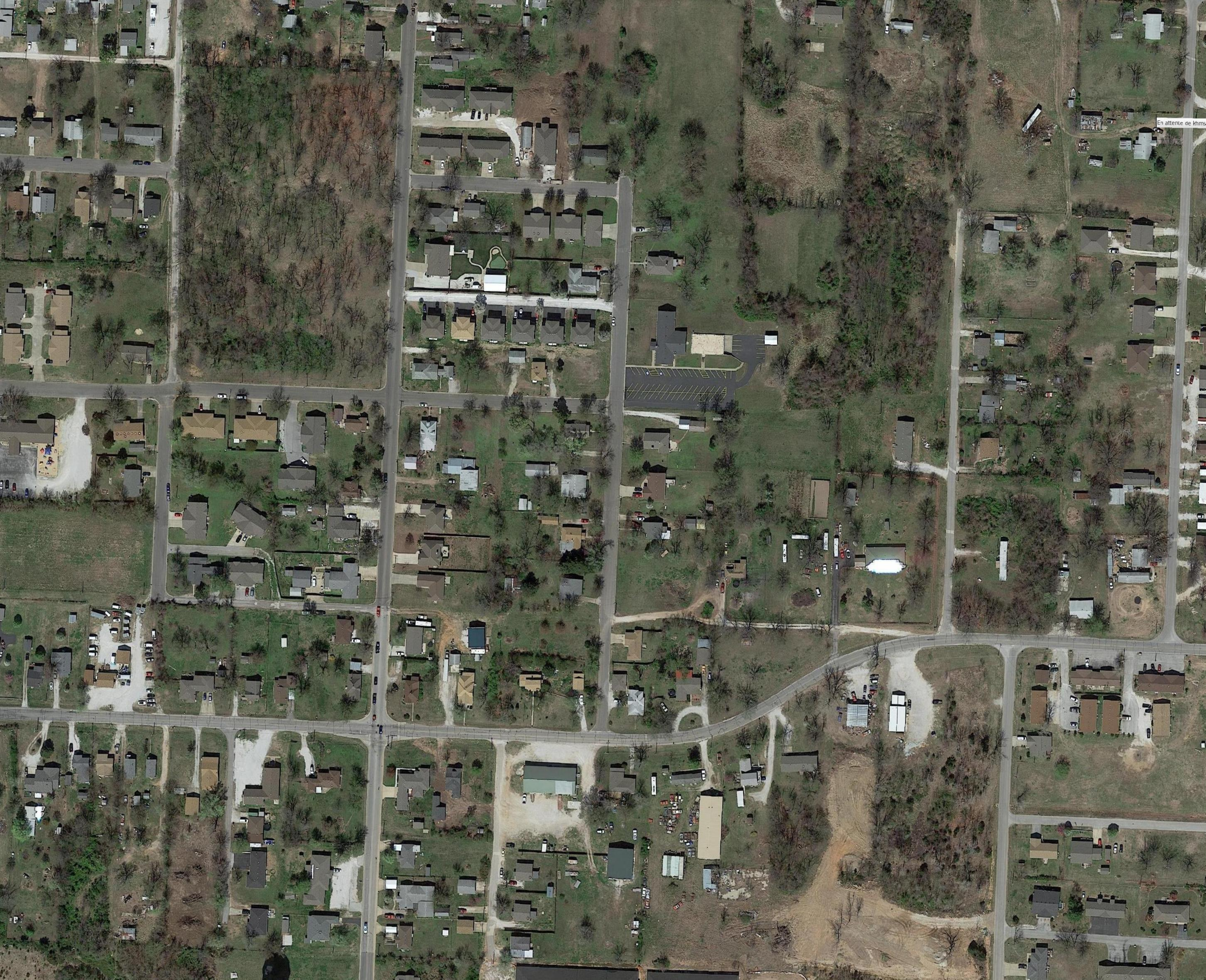}} \\
       \small  Ref image
  \end{tabular} 
    \begin{tabular}[b]{c}
       {\includegraphics[width=0.3\linewidth]{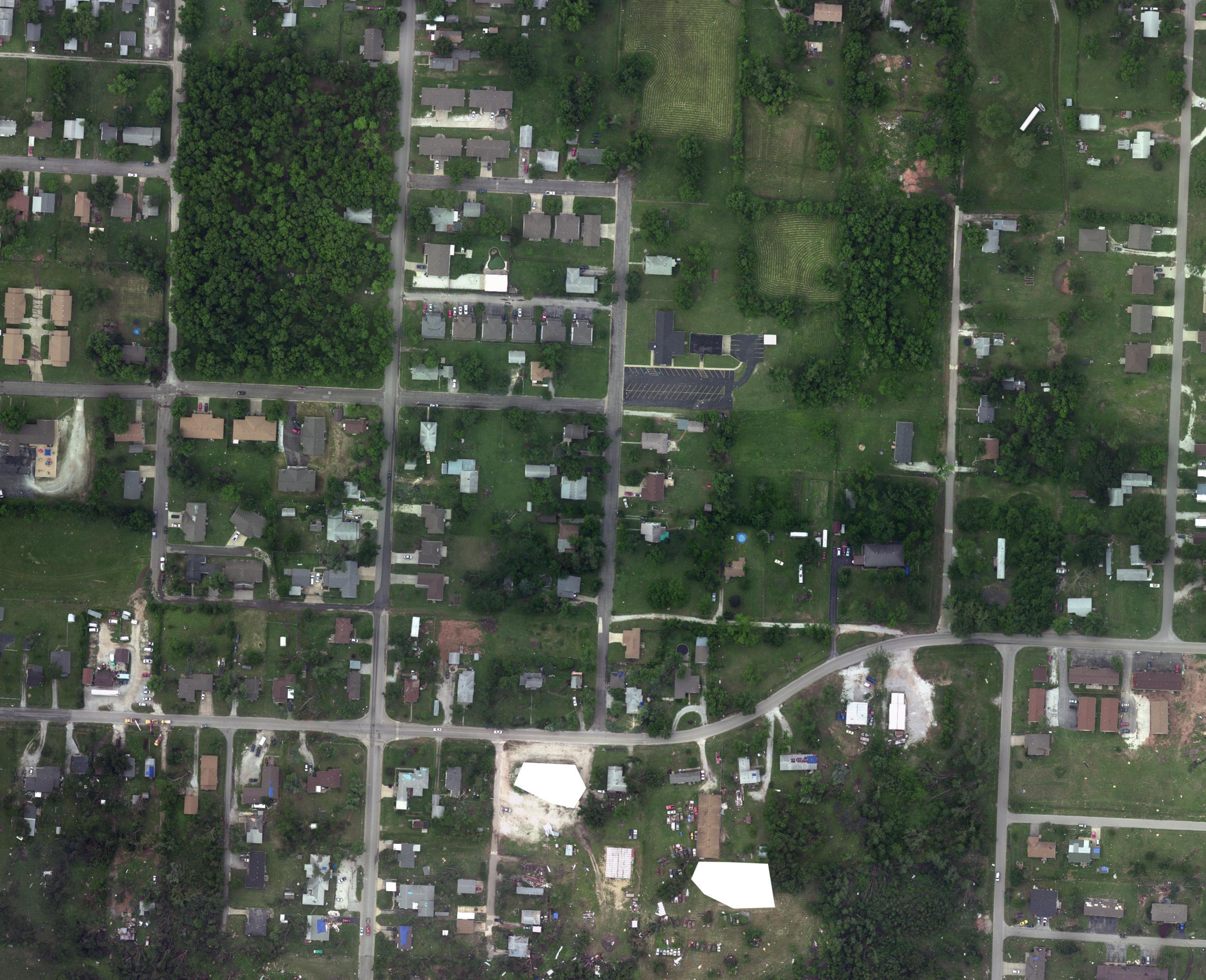}} \\ 
      \small  Test image + GT mask
    \end{tabular}    
 \begin{tabular}[b]{c}
       {\includegraphics[width=0.3\linewidth]{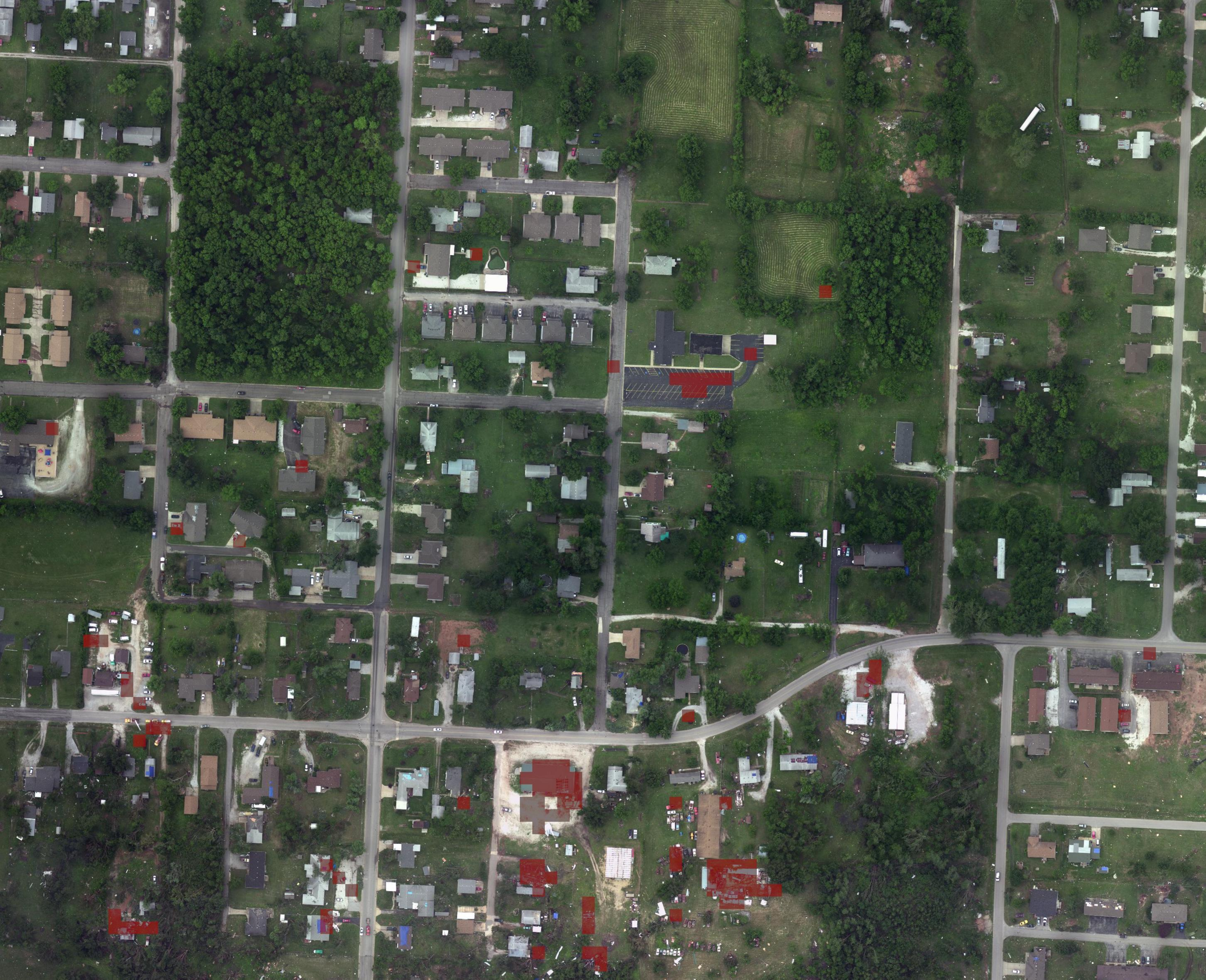}} \\
       \small  Standard CCA
 \end{tabular}
       \begin{tabular}[b]{c}
       {\includegraphics[width=0.3\linewidth]{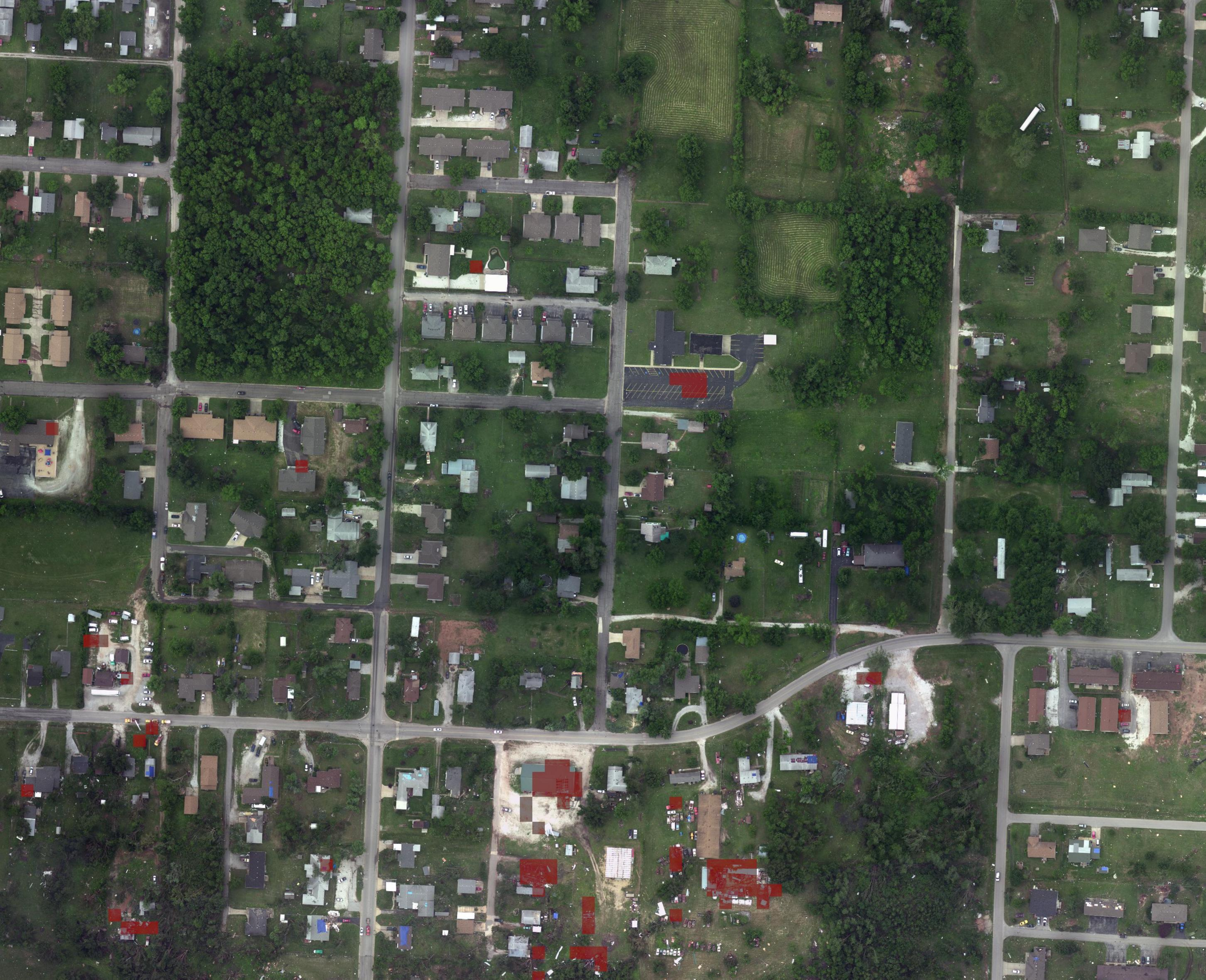}} \\
       \small  Sup CCA+CA
  \end{tabular} 
    \begin{tabular}[b]{c}
       {\hspace{-0.05cm} \includegraphics[width=0.3\linewidth]{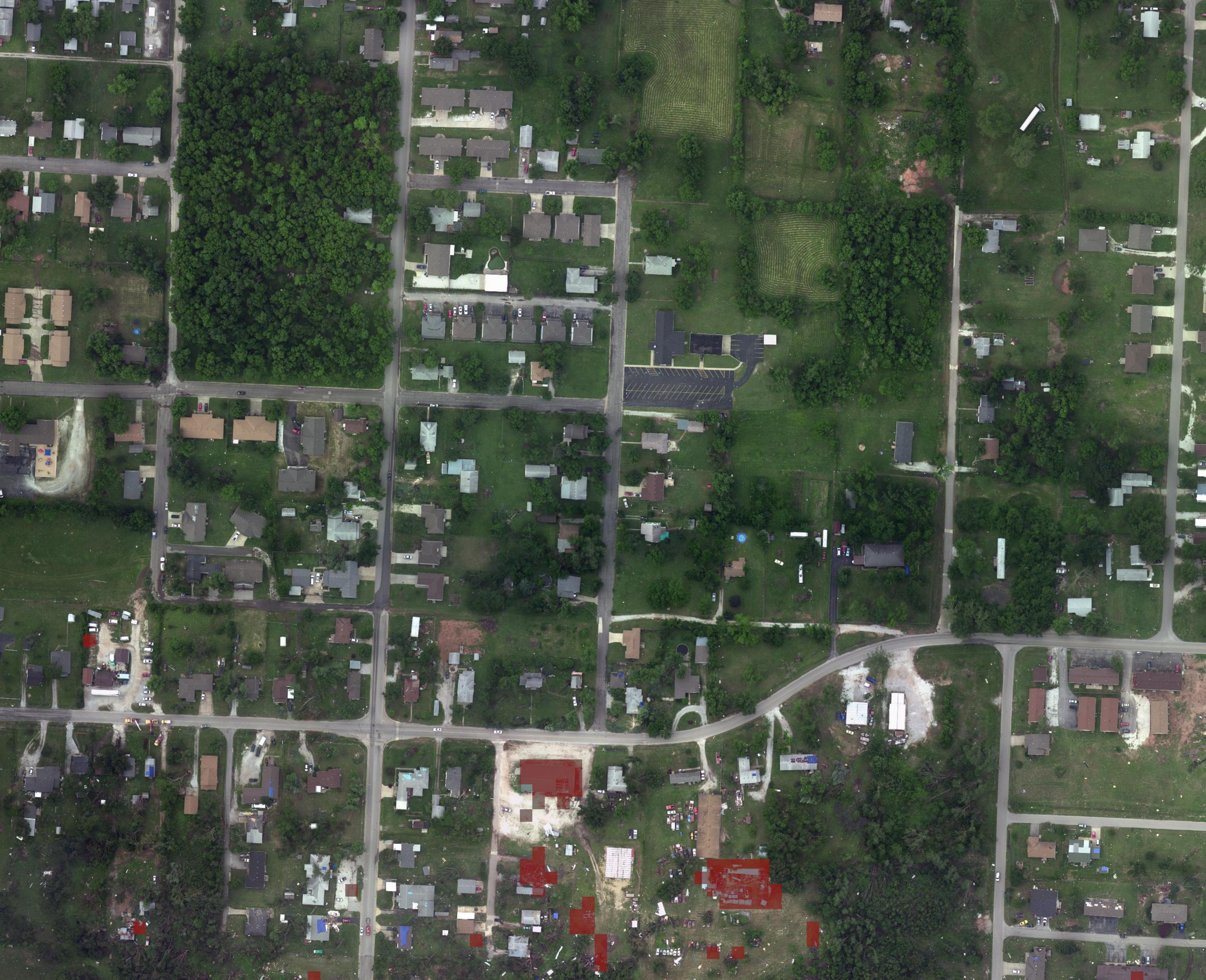}} \\ 
      \small  Res CCA  
    \end{tabular}    
 \begin{tabular}[b]{c}
       {\hspace{0.00cm}\includegraphics[width=0.3\linewidth]{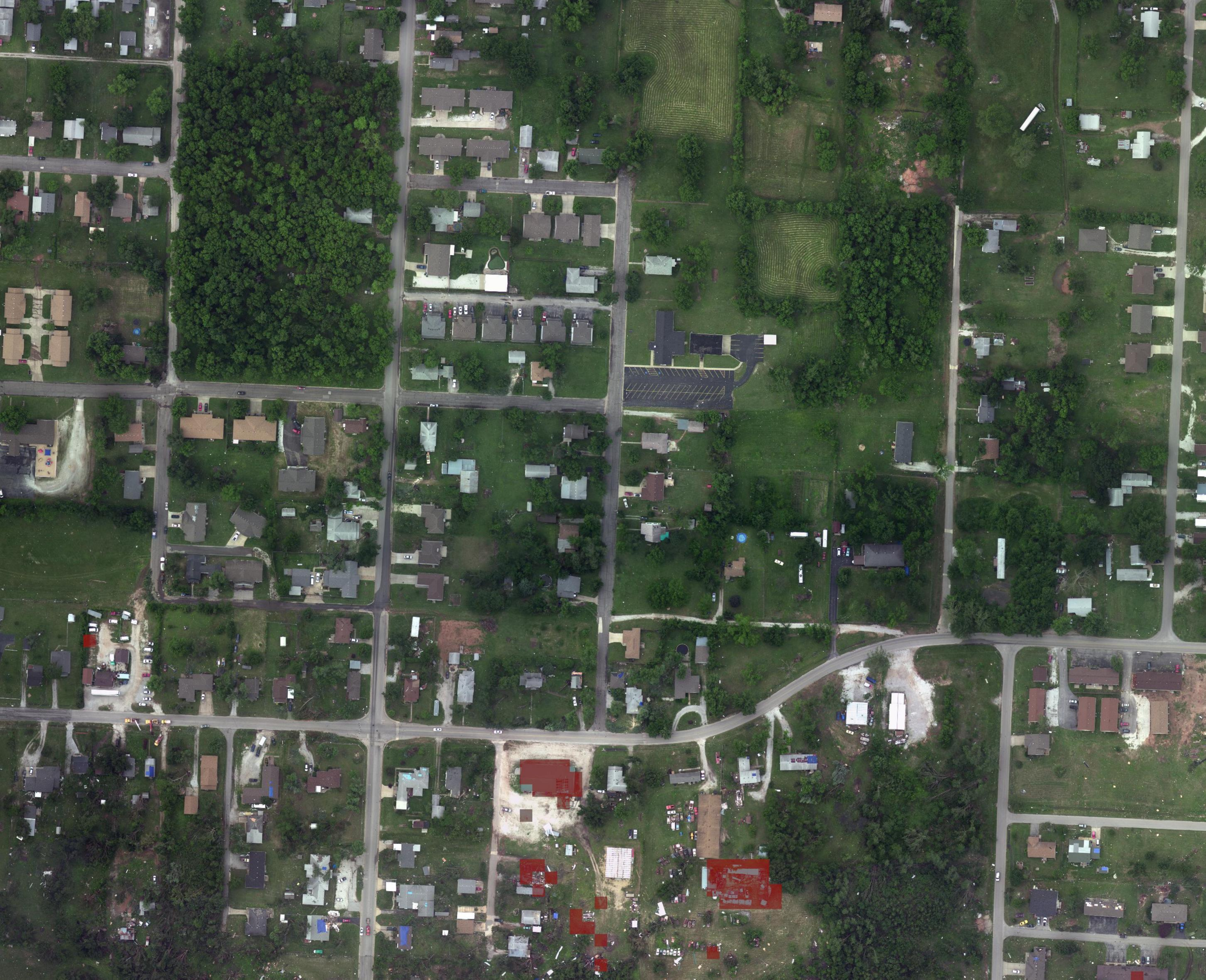}} \\
       \small  Res CCA+CA
 \end{tabular}
 \caption{These examples  show the evolution of detections (in red) for four different settings of CCA; as we go from top-right to bottom-right, change detection results get better. CCA acronyms shown below pictures are already defined in Table.1.}\label{fig:sat}
\end{figure*}

\section{Conclusion}\label{section4}
We introduced in this paper a new canonical correlation analysis method that learns projection matrices which map data from input spaces to a latent common space where unaligned data become strongly or weakly correlated depending  on  their  cross-view similarity and their context.  This  is  achieved  by  optimizing a criterion that mixes  two  terms:  the  first one aims at maximizing the correlations between data which are likely to be paired while the second  term  acts  as  a  regularizer and makes correlations spatially  smooth  and  provides  us  with  robust  context-aware latent representations. Our method  considers both labeled and unlabeled data when learning the CCA projections while being resilient  to  alignment  errors.  Extensive experiments show the substantial gain of our CCA method under the regimes of residual and strong alignment errors.

As a future work, our CCA method can be extended  to many  other tasks  where  alignments  are error-prone  and  when  context  can  be  exploited  in  order to  recover  from  these  alignment  errors.  These  tasks  include ``text-to-text'' alignment  in  multilingual  machine  translation,  as  well  as ``image-to-image'' matching in multi-view object tracking.
\appendix
\section{Appendix (proof of Proposition 2)} 
{\normalfont
  \small
\noindent We will prove that $\Psi$ is $L$-Lipschitzian, \\ with $L=\frac{\beta}{\gamma_{\min}} \big(\sum_c \big\|  {\bf E}_c \ {\bf 1}_{\tiny vu} \ {\bf F}_c' \big\|_1 +  \sum_c \big\|{\bf G}_c \ {\bf 1}_{\tiny vu} \ {\bf H}_c' \big\|_1\big)$. For ease of writing,  we omit in this proof the subscripts $t$, $r$ in ${\bf K}_{tr}$ (unless explicitly required and mentioned).  \\ 
Given two matrices ${\bf K}^{(2)}$, ${\bf K}^{(1)}$, we have $\big\| {\bf K}^{(2)} - {\bf K}^{(1)} \big\|_1=(*)$ with   \\ 
\begin{equation}\label{eq233}
  \begin{array}{ll}
(*) &=  \displaystyle \beta \bigg\|  \sum_c \bV {\bf W}^c_v \bV' (\Pt^{(1)} \Prt^{(1)} -\Pt^{(0)} \Prt^{(0)}) \bU {\bf W}^{c'}_u \bU' \\
 &    \ \ \ \    + \displaystyle \ \sum_c \bV {\bf W}^{c'}_v \bV' (\Pt^{(1)} \Prt^{(1)}-\Pt^{(0)} \Prt^{(0)}) \bU {\bf W}^{c}_u \bU' \bigg\|_1.
\end{array}
\end{equation}
Using Eq.~(\ref{ep01}), one may write 
\begin{equation}\label{hh00}
\Pt \Prt = \frac{1}{\gamma}  \C_{tt}^{-1}  {\bf K}_{tr} \C_{rr}^{-1},\end{equation} 
 which also results from the fact that ${\bf K}_{rt}  \C_{tt}^{-1}  {\bf K}_{tr}$ is Hermitian and $\C_{rr}$ is positive semi-definite. By adding the superscript $\tau$ in $\Pt$, $\Pr$, $\gamma$, ${\bf K}_{tr}$ (with $\tau=0,1$), omitting again the subscripts  $t$, $r$ in  ${\bf K}_{tr}$ and then plugging (\ref{hh00}) into (\ref{eq233}) we obtain
\begin{equation}\label{eq234}
\small
\begin{array}{lll}
(*) &= & \displaystyle \beta \bigg\|  \sum_c {\bf E}_c  \ (\frac{1}{\gamma^{(1)}}{\bf K}^{(1)}-\frac{1}{\gamma^{(0)}}{\bf K}^{(0)}) \   {\bf F}'_c    + \displaystyle \ \sum_c {\bf G}_c \ (\frac{1}{\gamma^{(1)}}{\bf K}^{(1)}-\frac{1}{\gamma^{(0)}}{\bf K}^{(0)}) \  {\bf H}_c' \bigg\|_1 \\  
 & \leq & \displaystyle \frac{\beta}{\gamma_{\min}}  \bigg\|  \sum_c {\bf E}_c  \ ({\bf K}^{(1)}-{\bf K}^{(0)})  \ {\bf F}_c'  \bigg\|_1   +  \ \displaystyle  \frac{\beta}{\gamma_{\min}}  \bigg\|  \sum_c 
{\bf G}_c  \ ({\bf K}^{(1)}-{\bf K}^{(0)}) \ {\bf H}_c' \bigg\|_1,  
\end{array}
\end{equation}
\noindent here $\gamma_{\min}$ is the lower bound of the eigenvalues of (\ref{ep01}) which can be derived (see for instance \cite{lu2000some}). Considering ${\bf K}_{k,\ell}$ as the $(k,\ell)^{\rm{th}}$ entry of ${\bf K}$, we have
\begin{equation*}
\small
\begin{array}{lll}
(*) & \leq  & \ \ \frac{\beta}{\gamma_{\min}}   \displaystyle \sum_{i,j} \bigg| \displaystyle \sum_{k,\ell} ({\bf K}^{(1)}_{k,\ell} - {\bf K}^{(0)}_{k,\ell})  \sum_{c} {\bf E}_{c,i,k} \  {\bf F}_{c,j,\ell}     \bigg| \\ 
 &  & \ \ \  +  \frac{\beta}{\gamma_{\min}}   \displaystyle \sum_{i,j} \bigg| \displaystyle \sum_{k,\ell} ({\bf K}^{(1)}_{k,\ell} - {\bf K}^{(0)}_{k,\ell})  \sum_{c} {\bf G}_{c,i,k} \  {\bf H}_{c,j,\ell}     \bigg| \\
 & & \\
  & {\bf \leq} & \ \ \frac{\beta}{\gamma_{\min}} \displaystyle \sum_{i,j} \displaystyle \sum_{k,\ell} \big| {\bf K}^{(1)}_{k,\ell} - {\bf K}^{(0)}_{k,\ell}  \big| \sum_c \big|  {\bf E}_{c,i,k} \  {\bf F}_{c,j,\ell}      \big| \\
  &  &  \ \ \ +\frac{\beta}{\gamma_{\min}} \displaystyle \sum_{i,j} \displaystyle \sum_{k,\ell} \big| {\bf K}^{(1)}_{k,\ell} - {\bf K}^{(0)}_{k,\ell}  \big| \sum_c \big|  {\bf G}_{c,i,k} \  {\bf H}_{c,j,\ell}      \big| \\
&  &   \\
  & \leq  & \ \ \frac{\beta}{\gamma_{\min}} \displaystyle \sum_{k,\ell} \big| {\bf K}^{(1)}_{k,\ell} - {\bf K}^{(0)}_{k,\ell} \big| \\
 &   &   \ \times  \bigg(\displaystyle \sum_{i,j} \sum_{k,\ell,c} \big|  {\bf E}_{c,i,k} \  {\bf F}_{c,j,\ell}   \big|  + \displaystyle \sum_{i,j} \sum_{k,\ell,c} \big|  {\bf G}_{c,i,k} \  {\bf H}_{c,j,\ell}      \big|\bigg)   \\
 &   &  (\textrm{as} \   \sum_{i} |a_i|.|b_i| \leq \sum_{i,j} |a_i|.|b_j|, \ \forall \ \{a_i\}_i, \{b_j\}_i \subset \mathbb{R} )\\
  &    & \\ 
 & = &   L \ \big\| {\bf K}^{(1)} - {\bf K}^{(0)} \big\|_1,
\end{array}  
\end{equation*} 
\begin{equation*}
\small
\textrm{with} \ \ \ L=\frac{\beta}{\gamma_{\min}} \big( \sum_c \big\| {\bf E}_c \ {\bf 1}_{\tiny vu} \ {\bf F}_c' \big\|_1 + \sum_c  \big\| {\bf G}_c \ {\bf 1}_{\tiny vu} \ {\bf H}_c' \big\|_1\big)\Box 
\end{equation*}
}

\bibliographystyle{IEEEtran}
\bibliography{egbib}

\end{document}